\newcommand{\Sec}[1]{\hyperref[sec:#1]{\S\ref*{sec:#1}}} 
\newcommand{\Eqn}[1]{\hyperref[eq:#1]{(\ref*{eq:#1})}} 
\newcommand{\Fig}[1]{\hyperref[fig:#1]{Figure~\ref*{fig:#1}}} 
\newcommand{\Tab}[1]{\hyperref[tab:#1]{Table~\ref*{tab:#1}}} 
\newcommand{\Thm}[1]{\hyperref[thm:#1]{Theorem~\ref*{thm:#1}}} 
\newcommand{\Lem}[1]{\hyperref[lem:#1]{Lemma~\ref*{lem:#1}}} 
\newcommand{\Prop}[1]{\hyperref[prop:#1]{Property~\ref*{prop:#1}}} 
\newcommand{\Cor}[1]{\hyperref[cor:#1]{Corollary~\ref*{cor:#1}}} 
\newcommand{\Def}[1]{\hyperref[def:#1]{Definition~\ref*{def:#1}}} 
\newcommand{\Alg}[1]{\hyperref[alg:#1]{Algorithm~\ref*{alg:#1}}} 
\newcommand{\Ex}[1]{\hyperref[ex:#1]{Example~\ref*{ex:#1}}} 
\newcommand{\Real}{\mathbb{R}}
\newcommand{\V}[1]{{\bm{\mathbf{\MakeLowercase{#1}}}}} 
\newcommand{\M}[1]{{\bm{\mathbf{\MakeUppercase{#1}}}}} 
\newtheorem{mydef}{Definition}
\newtheorem{mypropos}{Proposition}
\newtheorem*{myremark}{Remark}
\newtheorem{mylemma}{Lemma}
\newtheorem*{myproof1}{Proof of Proposition 1}
\begin{document}

\title{Robust Large-Scale Non-Negative Matrix Factorization Using Proximal Point Algorithm}

\author{\IEEEauthorblockN{Jason Gejie Liu and Shuchin Aeron}\\
\IEEEauthorblockA{Department of Electrical and Computer Engineering \\ 
Tufts University, Medford, MA 02155\\
Gejie.Liu@tufts.edu, shuchin@ece.tufts.edu}
}

\maketitle

\begin{abstract}
A robust algorithm for non-negative matrix factorization (NMF) is presented in this paper with the purpose of dealing with large-scale data, where the separability assumption is satisfied. In particular, we modify the Linear Programming (LP) algorithm of \cite{Ref6} by introducing a reduced set of constraints for exact NMF. In contrast to the previous approaches, the proposed algorithm does not require the knowledge of factorization rank (extreme rays\cite{Ref5} or topics \cite{Ref_ex_2}). Furthermore, motivated by a similar problem arising in the context of metabolic network analysis\cite{Ref_ex_7}, we consider an entirely different regime where the number of extreme rays or topics can be much larger than the dimension of the data vectors. The performance of the algorithm for different synthetic data sets are provided.
\end{abstract}

\IEEEpeerreviewmaketitle

\section{Introduction}
\label{Sec1}

Matrix factorization has numerous applications to the real-world problems where the data matrices representing the numerical observations are often huge and hard to analyze. Meanwhile, factorizing them into lower-rank forms is able to reveal the inherent structure and features, which helps in the meaningful interpretation of the data. In a wide range of natural signals, such as pixel intensities, amplitude spectra, and occurance counts, negative values are usually physically meaningless. In order to deal with this non-negative constraint, Non-Negative Matrix Factorization (NMF) was introduced. 

NMF was first proposed in \cite{Ref2} and used by Lee and Seung for parts-based data representation \cite{Ref4}. It is well known that NMF may not be unique. In this context a sufficient condition on the uniqueness of NMF was pointed out in \cite{Ref5}. A geometrical interpretation \cite{Ref7}, of this condition amounts to the fact that the extreme rays (topics) generating the cone (in the non-negative orthant) are contained in the data. Thus, for NMF, one only needs to identify these extreme rays. Additionally, it was demonstrated in \cite{Ref_ex_1} that under such a separability assumption, one can use Linear Programming (LP) to isolate the extreme rays from the non-extreme rays. In this paper we will focus only on such cases.

Bittorf et al.\cite{Ref6} presented a LP-based NMF algorithm named \emph{Hottopixx}. Kumar et al.\cite{Ref7}, instead, presented a fast conical hull algorithm to deal with the large-scale NMF based on its polyhedral structure. It was shown to perform much faster than \emph{Hottopixx}. However, both of these algorithms require the factorization rank, i.e. the number of extreme rays as a necessary input. Some applications will grant this as prior knowledge but from the view of robustness, the preference would go to a robust NMF algorithm. Gillis and Luce \cite{Ref8} reformulated the algorithm in \cite{Ref6} to detect the number of extreme rays automatically. Nevertheless, the limitation still exists with the fact that the number of constraints in the LP is enormous in face of the large-scale data.

Alternatively, motivated by \textbf{Theorem 5.4} in\cite{Ref_ex_1}, we propose a simpler modification of the constraints to alleviate these issues. In particular we reduce the number of constraints and do not require that the number of extreme rays to be known. This allows us to use a proximal point-based algorithm \cite{Ref10} to solve the LP problem efficiently for data sets with large size. 

In addition we also consider an entirely different regime from the NMF applications in the literature so far, where the data lies high dimension with a much smaller factorization rank (the number of extreme rays) in comparison. Specifically, we look at the case when the number of extreme rays is much larger than the dimensionality of the space. This is caused by the computational issues, which arises in Double Description (DD) method \cite{Ref_ex_6} for the analysis of metabolic networks to find the Elementary Flux Modes (EFMs) as the set of extreme rays of the polyhedral cone \cite{Ref_ex_7}. A NMF like problem arises as an intermediate step in DD method. In this context, we believe that the computational advances in NMF can help with addressing the computational issues in the DD method \cite{Ref_ex_6}. 

The organization of the paper is as follows. Section \ref{Sec2} provides a brief review of NMF from the geometric perspective as well as the \emph{Hottopixx} algorithm. Section \ref{Sec3} explains the proposed proximal point algorithm with the reformulated LP constraints. The experiments results are presented in Section IV and the paper concludes in Section V.

\textbf{Notation}: The matrices will be denoted by boldface capital letters and vectors by boldface small letters. In addition we use the MATLAB notation of $\text{diag}$ to transform vectors to diagonal matrices and to extract the diagonal from the matrix in the argument. Also we use the MATLAB $``,"$ and the $``;"$ operators for matrix concatenations.

\section{Non-negative Matrix Factorization}
\label{Sec2}
For the non-noisy case, given a data matrix $\M{X} = [\V{x}_1,\V{x}_2,...,\V{x}_n] \in \mathbb{R}_{+}^{m\times n}$. Therefore, NMF aims to find two nonnegative matrices $\M{F}\in\mathbb{R}_{+}^{m\times r}$ and $\M{W}\in\mathbb{R}_{+}^{r\times n}$  such that $\M{X} = \M{F}\M{W}$. For an approximate NMF, instead, the aim is to solve the following optimization problem. 
\begin{align}
\min_{ \M{F}, \M{W} \geq 0} || \M{X} - \M{F} \M{W}||_{2}^{2}
\end{align}

\subsection{Geometry of the NMF Problem}
The factorization $\M{X} = \M{F}\M{W}$ implies that all the \emph{columns} of $\M{X}$ can be represented as non-negative combination of the columns $\{\V{f}_i\}_{i = 1}^{r}$ of the matrix $\M{F}$. The algebraic characterization can be described as below.
\begin{mydef}
The $\mathbf{simplicial~cone}$ generated by columns  $ \{\V{f}_i\}_{i = 1}^{r}$ is given by,
\begin{equation} 
\Gamma = \Gamma_{F} = \{ \V{x}: \V{x} = \sum_{i}\alpha_{i} \V{f}_{i},\alpha_{i} \geq 0 \}
\end{equation}
\end{mydef} The factorization $\M{X} = \M{F}\M{W}$ refers  geometrically to that the $\V{x}_i, i = 1,2,...,n$ all lie in or on the surface of the simplical cone generated by the $\{\V{f}_i\}_{i=1}^{r}$, as depicted in Fig.~\ref{fig1}.
\begin{figure}[ht]
\centering
\includegraphics[width=0.28\textwidth]{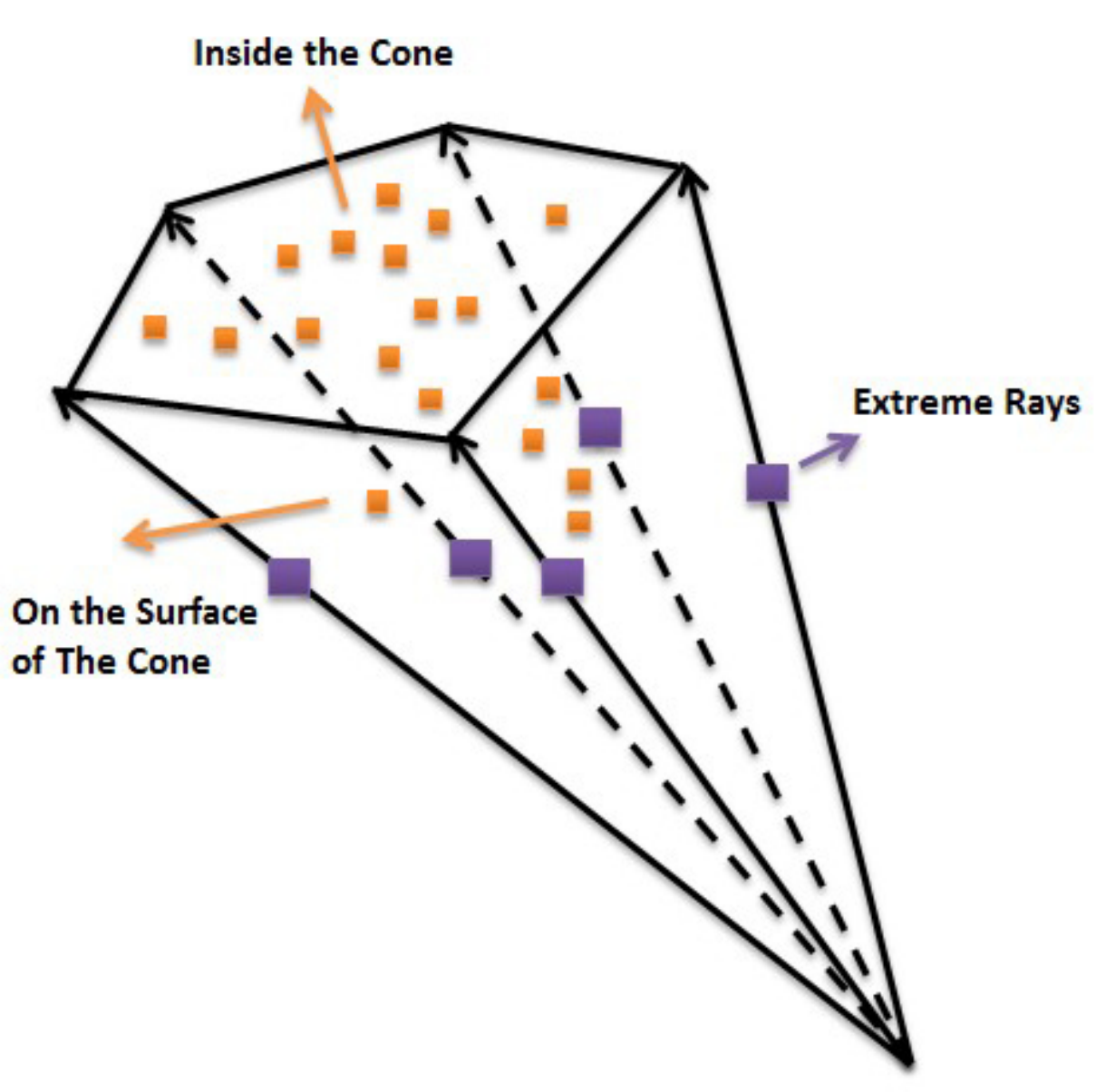}
\caption{Geometry of the NMF Problem. Separability implies that data is contained in a cone generated by a subset of $r$ extreme rays (indicated by purple squares).}
\label{fig1}
\end{figure} 

With this viewpoint in mind, we define three assumptions as follows.
\begin{itemize}
\item \textbf{Assumption 1}: \emph{Extreme rays} by definition are simplicial: No extreme ray is in the convex combination of the other extreme rays. This is also shown to be necessary and sufficient for exact recovery in topic modeling \cite{Ref_ex_9}.
\item \textbf{Assumption 2}: The dataset consisting of all columns of $\M{X}$, reside in or on the surface of a cone generated by these extreme rays of $\M{X}$\cite{Ref5}.
\item \textbf{Assumption 3}: Assuming that the columns of $\M{X}$ are normalized to unity there are no duplicate columns in $\M{X}$.
\end{itemize}
 
The above three assumptions will be collectively referred to as \emph{separability assumption} in the following: The entire dataset, i.e. all columns of $\M{X}$, reside in or on a surface of a cone generated by a small subset of $r$ columns of $\M{X}$, the vectors in this subset being simplicial and there are no duplicate columns in $\M{X}$ after column normalization.

In algebraic terms, $\M{X} = \M{F}\M{W} = \M{X}_{I}\M{W}$ for some subset $I \subseteq \{1,2,...,r\}$ of columns (\textbf{extreme rays}) of $\M{X}$ and where $\M{X}_{I}$ denotes the matrix built with columns of $\M{X}$ indexed by $I$. This means that the $r$ vectors of $\M{F}$ are hidden among the columns of $\M{X}$ ($I$ is unknown) \cite{Ref_ex_1}. Equivalently, it implies that the corresponding subset of $r$ \emph{rows} of $\M{W}$ constitutes the $r \times n$ weight matrix. Therefore, the computational challenge is to identify the extreme rays efficiently. In this context, we first outline the LP-based \emph{Hottopixx} Algorithm from \cite{Ref6} .
\subsection{Hottopixx}
Bittorf et al.\cite{Ref6} proposed an algorithm of NMF under separability assumption based on the following LP problem:
\begin{equation}
\begin{aligned}
\label{eq2_2}
& \min_{\M{C}\in\Phi_1(\M{X})} \V{p}^{T}\text{diag}(\M{C}) \\
\end{aligned}
\end{equation} and $\V{p}\in\Real^{n\times 1}$ is a random vector with distinct positive entries and $\M{C}\in\mathbb{R}_{+}^{n\times n}$ is referred to as a factorization localizing matrix\cite{Ref6}, which belongs to the following polyhedral set. 
\begin{align}
\label{eq2_3}
\Phi_1(\M{X}) =  \{ & \M{C}:  \M{X}\M{C} = \M{X}, \mbox{Trace}(\M{C}) = r, \M{C}(i,i)\leq 1~\text{for~all}~i \nonumber \\
                               & \M{C}(i,j)\leq \M{C}(i,i)~\text{for~all}~i,~j, \M{C} \geq 0 \}
\end{align}
For a large scale set-up they proposed an incremental gradient descent algorithm to solve the LP.

\section{Robust NMF Using \emph{Proximal Point} Algorithm}
\label{Sec3}
As explained before, two of the prominent shortcomings of existing algorithms for the NMF problem are - (i) Dependence on knowledge of the number of extreme rays $r$ and, (ii) Dealing with a large data set resulting in an enormous number of constraints. An approach in this direction was taken in \cite{Ref8}. However, the number of constraints in their reformulation is still immense for large data. 
In this paper we present a reformulation which drastically reduces the set of constraints. 

\subsection{LP Reformulation}
Assuming that the columns of $\M{X}$ are normalized to have an unit $\M{L}_1$ norm, our LP reformulation for NMF is given as
\begin{align}
\label{eq3_1}
& \min_{\M{C}\in \Phi_2(\M{X})}  \V{p}^{T}\text{diag}(\M{C})
\end{align}
where, 
\begin{align}
\label{eq3_4}
& \Phi_2(\M{X}) = \{ \M{C}: \M{X}\M{C} = \M{X},  \M{C}^T \V{1} = \V{1}, \M{C} \geq 0\}
\end{align} where $\V{1} \in\Real_{+}^{n\times 1}$ is the vector of all $1$-s and $\V{p}\in\Real_{+}^{n\times 1}$ is the same as the vector in (\ref{eq2_2}).
\begin{mypropos}
\label{eq3_5}
Suppose $\M{X}$ admits a separable factorization $\M{F} \M{W}$, compute the minimization of (\ref{eq3_1}) and let $I = \{i: \M{C}_{ii} = 1\}$, then $\M{F} = \M{X}_{I}$.
\end{mypropos}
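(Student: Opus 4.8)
The plan is to exploit the fact that, after the unit-$\M{L}_1$ normalization assumed in \Eqn{3_1}, all columns of $\M{X}$ lie on the probability simplex $\{\V{x} \ge 0 : \V{1}^T\V{x} = 1\}$, so that the constraints defining $\Phi_2(\M{X})$ acquire a clean convex-geometric reading. Indeed, $\M{X}\M{C} = \M{X}$ together with $\M{C}^T\V{1} = \V{1}$ and $\M{C} \ge 0$ says precisely that each column $\V{c}_j$ of $\M{C}$ lies in the probability simplex and expresses $\V{x}_j = \M{X}\V{c}_j$ as a convex combination of all columns of $\M{X}$. I would write $\Lambda$ for the (unknown) index set of the columns of $\M{X}$ that are extreme rays, so that $\M{F} = \M{X}_\Lambda$, and prove that the minimizer $\M{C}^\star$ satisfies $\M{C}^\star_{ii} = 1 \iff i \in \Lambda$; the claim $\M{F} = \M{X}_I$ then follows immediately.

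First I would pin down the diagonal on the true extreme rays. For $i \in \Lambda$, Assumptions 1 and 2 guarantee that the normalized $\V{x}_i$ is a vertex (extreme point) of the convex hull $\mathrm{conv}\{\V{x}_1,\dots,\V{x}_n\}$, hence cannot be written as a convex combination of the remaining columns; invoking Assumption 3 (no duplicate columns after normalization) the only feasible weight vector is $\V{c}_i = \V{e}_i$, which forces $\M{C}_{ii} = 1$ for \emph{every} $\M{C} \in \Phi_2(\M{X})$, the optimizer included. Concretely, writing $\V{x}_i = \M{C}_{ii}\V{x}_i + \sum_{k\ne i}\M{C}_{ki}\V{x}_k$ and assuming $\M{C}_{ii}<1$ would exhibit $\V{x}_i$ as a convex combination of the other columns, contradicting extremality. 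This yields the inclusion $\Lambda \subseteq \{i : \M{C}^\star_{ii} = 1\}$ and fixes the contribution $\sum_{i\in\Lambda}\VE{p}{i}$ to the objective regardless of $\M{C}$.

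Next I would show this contribution is in fact the optimal value. For $j \notin \Lambda$, separability ($\M{X} = \M{X}_\Lambda\M{W}$) writes $\V{x}_j = \sum_{i\in\Lambda}\M{W}_{ij}\V{x}_i$ with nonnegative weights that, by normalization, sum to $1$; placing these weights in $\V{c}_j$ (supported on $\Lambda$) produces a feasible $\M{C}$ with $\M{C}_{jj} = 0$ for all $j \notin \Lambda$, whose objective value equals $\sum_{i\in\Lambda}\VE{p}{i}$. Because the entries of $\V{p}$ are strictly positive and $\M{C}_{jj}\ge 0$, every feasible $\M{C}$ obeys $\V{p}^T\text{diag}(\M{C}) = \sum_{i\in\Lambda}\VE{p}{i} + \sum_{j\notin\Lambda}\VE{p}{j}\M{C}_{jj} \ge \sum_{i\in\Lambda}\VE{p}{i}$, so this value is the minimum. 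At any optimizer $\M{C}^\star$ the bound is tight, which forces $\sum_{j\notin\Lambda}\VE{p}{j}\M{C}^\star_{jj} = 0$ and hence $\M{C}^\star_{jj} = 0$ for every $j \notin \Lambda$. Combining with the previous step gives $\{i:\M{C}^\star_{ii}=1\} = \Lambda$, i.e. $I = \Lambda$ and $\M{X}_I = \M{F}$.

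The step I expect to be the main obstacle is the first one: rigorously converting the cone/extreme-ray hypotheses into the statement that each normalized extreme ray is an extreme point of $\mathrm{conv}\{\V{x}_1,\dots,\V{x}_n\}$ with a \emph{unique} convex representation. This is exactly where all three separability assumptions are consumed — Assumption 2 to place every column inside the hull, Assumption 1 to certify that no extreme ray is a convex combination of the others, and Assumption 3 to rule out the degenerate non-uniqueness caused by duplicated columns — and care is needed to verify that the $\M{L}_1$ normalization makes the weights of any feasible representation sum to exactly $1$, so that the vertex argument is valid. By contrast, the optimality argument in the last step is a short nonnegativity-and-tightness computation and uses only the positivity of $\V{p}$.
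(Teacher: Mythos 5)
Your proof is correct, but it takes a genuinely different route from the paper's. The paper argues through LP duality: it forms the Lagrangian (\ref{eq_5}), derives the dual program (\ref{eq_9}), and establishes the proposition via two lemmas --- $\M{C}_{\ell\ell}=1$ for $\ell\in I$ and $\M{C}_{\ell\ell}=0$ for $\ell\notin I$ --- each proved by exhibiting explicit dual-feasible multipliers (built from the separating vectors $\V{\rho}_i$ of (\ref{eq_12})) and invoking weak duality. You instead stay entirely in the primal and read the constraints geometrically: after $L_1$ normalization, membership in $\Phi_2(\M{X})$ says exactly that each column is written as a convex combination of the columns, so extremality (consuming Assumptions 1--3) forces $\M{C}_{ii}=1$ on the extreme rays for \emph{every} feasible $\M{C}$, while a direct feasible construction plus strict positivity of $\V{p}$ forces $\M{C}_{jj}=0$ off the extreme rays \emph{at any optimum}. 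Your version is more elementary (no dual program, no separating-vector construction) and, importantly, it places the two halves of the argument where they actually belong: the identity matrix always lies in $\Phi_2(\M{X})$, so the paper's Lemma~1 as literally stated --- that $\M{C}_{\ell\ell}=0$ for \emph{all} feasible $\M{C}$ when $\ell\notin I$ --- cannot be right; that conclusion holds only at the minimizer, which is precisely how you prove it. The one step that genuinely requires care is the one you flag: converting the conic separability assumptions into the statement that each normalized extreme ray is a vertex of $\mathrm{conv}\{\V{x}_1,\dots,\V{x}_n\}$ with no nontrivial convex representation. That step uses all three assumptions (Assumption~2 to put every column in the hull of the normalized extreme rays, Assumption~1 for extremality among the extreme rays themselves, Assumption~3 to exclude the duplicate-column degeneracy), but it is routine once the $L_1$ normalization is used to turn conic weights into convex weights, and your sketch of it is sound.
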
 

In order to prove the above proposition, we consider the Lagrangian of the optimization problem in (\ref{eq3_1}), which is,
\begin{equation}
\begin{aligned}
\label{eq_5}
L(\M{C},\M{R},\M{\lambda}) = \underset{\M{C}}{\text{min}}~& \M{p}^{T}\text{diag}(\M{C}) + \text{Tr}\{\M{R}^{T}(\M{X}\M{C} - \M{X})\} \\
& + \M{\lambda}^{T}(\M{C}^{T} \textbf{1} - \textbf{1}) + \text{Tr}\{\M{M}^{T}\M{C}\}
\end{aligned}
\end{equation} where $\M{R},\M{\lambda}$ and $\M{M}$ are the Lagrange multipliers. Then the dual form of (\ref{eq3_1}) is


\begin{equation}
\begin{aligned}
\label{eq_9}
& \max_{\M{R},\M{\lambda},\M{M}} & & -\text{Tr}\{\M{R}^{T}\M{X}\} - \M{\lambda}^{T}\textbf{1} \\
& \text{s.j.t} & & \M{P} + \M{X}^{T}\M{R} + \textbf{1}\M{\lambda}^T + \M{M} = 0,\M{M}\geq 0
\end{aligned}
\end{equation} The proof of the proposition follows from \textbf{Lemma 1} and \textbf{Lemma 2} below. 
\begin{mylemma}
If $\ell\notin I, \M{C}_{\ell\ell} = 0~for~all~\M{C}\in\Phi_2(\M{X})$.
\end{mylemma}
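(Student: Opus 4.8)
The plan is to exploit that both the cost $\V{p}^{T}\text{diag}(\M{C})=\sum_{j}p_{j}\M{C}_{jj}$ and the feasible set $\Phi_2(\M{X})$ split column by column. Writing $\V{c}_{j}$ for the $j$-th column of $\M{C}$, the constraints $\M{X}\M{C}=\M{X}$, $\M{C}^{T}\V{1}=\V{1}$, $\M{C}\geq0$ are equivalent to $\M{X}\V{c}_{j}=\V{x}_{j}$, $\V{1}^{T}\V{c}_{j}=1$, $\V{c}_{j}\geq0$ holding independently for each $j$, while the objective is $\sum_{j}p_{j}(\V{c}_{j})_{j}$. Minimizing (\ref{eq3_1}) therefore reduces to minimizing $p_{\ell}(\V{c}_{\ell})_{\ell}$ separately for each $\ell$; since $p_{\ell}>0$ and $(\V{c}_{\ell})_{\ell}\geq0$, a minimizer attains the smallest value of $(\V{c}_{\ell})_{\ell}$ allowed by the per-column polytope. (I read the statement as referring to such a minimizer $\M{C}$, since for a generic feasible point the diagonal need not vanish---e.g. $\M{C}=\M{I}\in\Phi_2(\M{X})$.) The claim then reduces to showing that this per-column minimum equals $0$ whenever $\ell\notin I$.

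First I would exhibit a feasible column $\V{c}_{\ell}$ with $(\V{c}_{\ell})_{\ell}=0$. Because $\ell\notin I$, the ray $\V{x}_{\ell}$ is not extreme, so by Assumption 2 it lies in the cone generated by the extreme rays, $\V{x}_{\ell}=\sum_{i\in I}\alpha_{i}\V{x}_{i}$ with $\alpha_{i}\geq0$. Applying $\V{1}^{T}$ and using that every column of $\M{X}$ has unit $\M{L}_{1}$ norm gives $1=\V{1}^{T}\V{x}_{\ell}=\sum_{i\in I}\alpha_{i}$, so the conical representation is automatically convex. Setting $(\V{c}_{\ell})_{i}=\alpha_{i}$ for $i\in I$ and $0$ otherwise produces a point of the per-column polytope whose $\ell$-th entry vanishes (as $\ell\notin I$). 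Since $(\V{c}_{\ell})_{\ell}\geq0$ is bounded below by $0$, the minimum is $0$ and any minimizer has $\M{C}_{\ell\ell}=0$, the assertion of the lemma. The same conclusion can alternatively be read off the dual (\ref{eq_9}) by complementary slackness, once one shows the multiplier $\M{M}\geq0$ attached to $\M{C}\geq0$ satisfies $\M{M}_{\ell\ell}>0$, since then $\M{M}_{\ell\ell}\M{C}_{\ell\ell}=0$ forces $\M{C}_{\ell\ell}=0$.

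The step that carries the real content is the normalization argument promoting conical membership to a convex combination supported on $I$: this is the only place the unit $\M{L}_{1}$ normalization is essential, and it is what lets the constructed column meet $\V{1}^{T}\V{c}_{\ell}=1$ and $\M{X}\V{c}_{\ell}=\V{x}_{\ell}$ at once. Everything else is immediate from the decoupling and $p_{\ell}>0$. The only remaining care is bookkeeping: the representation is supported on $I$ so coordinate $\ell$ is never charged, which uses nothing beyond $\ell\notin I$; Assumptions 1 and 3 are not needed for this construction but guarantee that $I$ is well defined and that the companion Lemma 2 (extreme rays forced to $\M{C}_{ii}=1$) holds, so that $\{i:\M{C}_{ii}=1\}$ recovers exactly the extreme-ray set claimed in Proposition 1.
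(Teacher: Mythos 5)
Your argument is correct, and it takes a genuinely different route from the paper's --- in fact it quietly repairs the lemma. The paper proceeds by LP duality: for the auxiliary program $\min_{\M{C}\in\Phi_2(\M{X})} -\V{e}_{\ell}^{T}\text{diag}(\M{C})$ it exhibits a point claimed to be feasible for the dual (\ref{eq_9}) with dual value $0$ and invokes weak duality to conclude $\M{C}_{\ell\ell}=0$ for \emph{every} $\M{C}\in\Phi_2(\M{X})$. As your observation $\M{C}=\M{I}\in\Phi_2(\M{X})$ shows, that universal claim is false, so no valid dual point can certify the lower bound $0$; the trouble is traceable to the sign of the multiplier $\M{M}$ attached to $\M{C}\geq 0$ in the Lagrangian (\ref{eq_5}), which places the dual inequality on the wrong side and makes (\ref{eq_9}) fail to lower-bound the primal. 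Your primal, constructive argument instead proves exactly what Proposition 1 needs: the program decouples column by column, the unit $\M{L}_1$ normalization upgrades the conic representation $\V{x}_{\ell}=\sum_{i\in I}\alpha_{i}\V{x}_{i}$ to a convex combination supported on $I$, so the per-column minimum of $\M{C}_{\ell\ell}$ is $0$, and $p_{\ell}>0$ forces every minimizer of (\ref{eq3_1}) to attain it. This is more elementary (no dual certificate required) and is the version of the statement that is actually true; the lemma's phrasing ``for all $\M{C}\in\Phi_2(\M{X})$'' should be read as ``for every minimizer of (\ref{eq3_1}).'' Note that the same repair is not needed for Lemma \ref{lem2}: for $\ell\in I$ the representation of an extreme ray is unique under Assumptions 1 and 3, so $\M{C}_{\ell\ell}=1$ genuinely holds for all feasible $\M{C}$. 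Your complementary-slackness aside would require the corrected dual, so the constructive version is the one to keep.
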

\begin{proof}
For $\ell\notin I$, consider the LP problem
\begin{equation}
\begin{aligned}
\label{eq_10}
& \underset{\M{C}\in\Phi_2(X)}{\text{min}} & -\V{e_{\ell}}^{T}\text{diag}(\M{C})
\end{aligned}
\end{equation} where $\V{e_{\ell}}\in\Real^{n\times 1}_{+}$ denotes the vector with $\ell$th entry $1$ and the rest $0$. Assign $\M{P} = -\text{diag}(\V{e_{\ell}})$ and using the constraint $\M{C}\geq 0$, we can claim that $-\V{e}_{\ell}^{T}\text{diag}(\M{C}) \leq 0$. Under the separability assumption, there exists a collection of vectors $\{\V{\rho}_i\}\in\Real^{n\times 1}$ such that,
\begin{eqnarray}
\label{eq_12}
\rho_i^T \V{x}_{i} & = u_i & \\ 
\rho_i ^T \V{x}_{j} & \leq -v_i,& ~\text{for}~j\neq i\nonumber
\end{eqnarray} for $u_i = 0$ and some $v_i\geq 0$. A feasible solution to (\ref{eq_9}) is 
\begin{equation}
\begin{aligned}
\label{eq_21}
& \M{P} = -\text{diag}(\V{e_{\ell}}),~\V{\lambda} = \V{0}\in\Real^{n\times 1} \\
& \M{R} = [0,...,\V{\rho}_{i},...,0], \,\, \mbox{for some}\,\, i\in I \\\
& \M{M} = \M{M}_1 + \M{M}_2: \M{M}_1 = \text{diag}(\V{e}_{\ell}),~ \M{M}_2 = -\M{X}^{T}\M{R} = \M{0}
\end{aligned} 
\end{equation} With such selection, the dual cost function is equal to  $0$. From \emph{weak duality} \cite{Ref_ex_5} it follows that $-\V{e}_{\ell}^{T}\text{diag}(\M{C}) \geq 0$. Combined with $-\V{e}_{\ell}^{T}\text{diag}(\M{C}) \leq 0$, it implies $\V{e}_{\ell}^{T}\text{diag}(\M{C}) = 0$ and $\M{C}_{\ell \ell} = 0$.
\end{proof}

\begin{mylemma}
\label{lem2}
If $\ell\in I, \M{C}_{\ell\ell} = 1~for~all~\M{C}\in\Phi_2(\M{X})$.
\end{mylemma}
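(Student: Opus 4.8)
The plan is to prove the stronger statement that the whole $\ell$-th column of $\M{C}$ equals $\V{e}_\ell$ whenever $\ell\in I$, from which $\M{C}_{\ell\ell}=1$ is immediate. First I would extract the two consequences of feasibility for this column. The equality $\M{X}\M{C}=\M{X}$ gives $\V{x}_\ell=\sum_{i}\M{C}_{i\ell}\V{x}_i$, while the $\ell$-th entry of $\M{C}^T\V{1}=\V{1}$ forces $\sum_i\M{C}_{i\ell}=1$; together with $\M{C}\geq0$ this exhibits $\V{x}_\ell$ as a convex combination of all columns of $\M{X}$. Because the columns are normalized to unit $\M{L}_1$ norm and $\M{X}\geq0$, the bound $\M{C}_{\ell\ell}\leq1$ is immediate, so only the reverse direction needs work.

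Next I would collapse this representation onto the extreme rays. Under the separability assumption each column admits $\V{x}_i=\sum_{k\in I}\M{W}_{ki}\V{x}_k$ with $\M{W}_{ki}\geq0$, and taking $\M{L}_1$ norms (all vectors being nonnegative) shows $\sum_{k\in I}\M{W}_{ki}=1$, so every column is itself a convex combination of the extreme rays. Substituting into $\V{x}_\ell=\sum_i\M{C}_{i\ell}\V{x}_i$ and interchanging the sums gives $\V{x}_\ell=\sum_{k\in I}\gamma_k\V{x}_k$ with $\gamma_k=\sum_i\M{C}_{i\ell}\M{W}_{ki}\geq0$ and $\sum_{k\in I}\gamma_k=\sum_i\M{C}_{i\ell}=1$, i.e. $\V{x}_\ell$ is written using the extreme rays alone.

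The crux is then to invoke Assumption 1. If $\gamma_\ell<1$ I can rearrange to $\V{x}_\ell=\sum_{k\neq\ell}\tfrac{\gamma_k}{1-\gamma_\ell}\V{x}_k$, displaying the extreme ray $\V{x}_\ell$ as a convex combination of the remaining extreme rays, contradicting the simplicial property; hence $\gamma_\ell=1$ and $\gamma_k=0$ for $k\neq\ell$. It remains to transport this back to $\M{C}$: the identity $1=\gamma_\ell=\sum_i\M{C}_{i\ell}\M{W}_{\ell i}$ combined with $\sum_i\M{C}_{i\ell}=1$, $\M{C}_{i\ell}\geq0$, and $\M{W}_{\ell i}\in[0,1]$ forces $\M{W}_{\ell i}=1$ for every $i$ with $\M{C}_{i\ell}>0$. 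But $\M{W}_{\ell i}=1$ means $\V{x}_i=\V{x}_\ell$, and by Assumption 3 (no duplicate normalized columns) this occurs only for $i=\ell$. Therefore $\M{C}_{i\ell}=0$ for $i\neq\ell$ and $\M{C}_{\ell\ell}=1$.

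I expect the main obstacle to be this final transport step rather than the convexity argument: pinning down that all off-diagonal weight in the $\ell$-th column vanishes requires simultaneously using the column-sum constraint, the extreme-ray uniqueness, and the distinctness assumption, and it is precisely here that all three ingredients of the separability assumption are genuinely needed. As an alternative that parallels the proof of Lemma 1, one could instead study the LP $\min_{\M{C}\in\Phi_2(\M{X})}\V{e}_\ell^{T}\text{diag}(\M{C})$ and exhibit a dual-feasible point of value $1$ in (\ref{eq_9}), assembled from the separating functionals $\{\V{\rho}_i\}$, so that weak duality yields $\M{C}_{\ell\ell}\geq1$; I would nonetheless favor the geometric route above, since it reaches the same conclusion without having to construct a dual certificate.
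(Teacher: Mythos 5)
Your proof is correct, but it takes a genuinely different route from the paper's. The paper argues by LP duality: it obtains $\M{C}_{\ell\ell}\leq 1$ from the constraints $\M{C}^{T}\V{1}=\V{1}$, $\M{C}\geq 0$, and then exhibits an explicit feasible point of the dual program (\ref{eq_9}) (with $\M{P}=\text{diag}(\V{e}_{\ell})$, $\M{R}=\M{0}$, and $\V{\lambda}$ equal to $-1$ in the $\ell$-th entry) whose objective value is $1$, so that weak duality gives $\M{C}_{\ell\ell}\geq 1$ --- exactly the alternative you sketch in your final paragraph but chose not to pursue. Your primal, geometric argument instead reads off from feasibility that $\V{x}_{\ell}$ is a convex combination of the columns of $\M{X}$, collapses that representation onto the extreme rays using separability and the unit $\M{L}_1$ normalization, and then lets Assumption 1 force all of the weight onto $\V{x}_{\ell}$ itself and Assumption 3 eliminate the off-diagonal entries. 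This buys you two things: you prove the strictly stronger statement that the entire $\ell$-th column of $\M{C}$ equals $\V{e}_{\ell}$, and the role of each assumption --- in particular of the hypothesis $\ell\in I$, which enters through the simplicial property --- is completely explicit. By contrast, the paper's dual certificate for this lemma sets $\M{R}=\M{0}$ and nowhere references $I$ or the separating functionals $\{\V{\rho}_i\}$, so it is not visible from that certificate where $\ell\in I$ is actually needed; your argument closes that gap. The price is length: the duality proof is three lines and structurally parallel to the proof of Lemma 1, whereas yours requires the interchange of sums and the final transport step that you correctly identify as the delicate part.
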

\begin{proof}
For $\ell\in I$, Consider the LP problem
\begin{equation}
\begin{aligned}
\label{eq_11}
& \underset{\M{C}\in\Phi_2(\M{X})}{\text{min}} & & \V{e}_{\ell}^{T}\text{diag}(\M{C}) 
\end{aligned}
\end{equation} Note that the constraint $\M{C}^{T}\textbf{1} = \textbf{1}$ implies that $\M{C}\leq 1$ therefore $\V{e}_{\ell}^{T}\text{diag}(\M{C}) \leq 1$. For the dual program a feasible solution can be found as
\begin{equation}
\begin{aligned}
\label{eq_13}
& \M{P} = \text{diag}(\V{e}_{\ell}),\V{\lambda}^{T} = [0,0,... -1,...,0]\in\Real^{1\times n}, \text{where}~\ell\text{th entry is} -1 \\
& \M{R} = 0,~ \M{M} = -\V{1\lambda}^{T} - \M{P} 
\end{aligned}
\end{equation} for which the dual cost function (\ref{eq_9}) is equal to 1. Again using the \emph{weak duality} \cite{Ref_ex_5}, it implies that $\V{e}_{\ell}^{T}\text{diag}(\M{C}) \geq 1$ and from $\V{e}_{\ell}^{T}\text{diag}(\M{C}) \leq 1$, we have $\V{e}_{\ell}^{T}\text{diag}(\M{C}) =1$ and $\M{C}_{\ell \ell} = 1$. 
\end{proof} 

\begin{myproof1}
\emph{Let $\M{C}_0$ denote the factorization localizing matrix which identifies the factorization with lowest cost
$\V{p}^{T}\text{diag}(\M{C})$ and has either ones or zeros on the diagonal. Then $\M{C}_0$ is the unique optimal solution of (\ref{eq3_1}). To see this, let $I$ denote the set of simplicial columns of minimum cost. Since each column of $\M{X}$ can only belong to $I$ or not, once $\M{X}$ is given, $\M{C}$ is determined then the lowest cost $\V{p}^{T}\text{diag}(\M{C})$ is determined, which is unique.
}
\end{myproof1}

\begin{myremark}
Note that $\M{W}$ can be readily obtained from $\M{C}$ as $\M{W} = \M{C}(I,:)$ (in MATLAB notation). 
\end{myremark}

\subsection{Proximal Point Algorithm}
Based on the above reformulation, a proximal point-based (\emph{Proximal Point}) algorithm is employed in this paper. A necessary pre-processing step is the column normalization, which makes sure that the sum of each column of $\M{X}$ is equal to one.

After the normalization, we can rewrite the LP in (\ref{eq3_1}) as 
\begin{align}
\label{eq4_1}
& \min_{\M{C} \in \Phi_3(\M{X})}  \V{p}^{T}\text{diag}(\M{C})
\end{align}
where, 
\begin{align}
\label{eq4_2}
\Phi_3(\M{X}) = \{ \M{C}: \M{AC} = \M{A}, \M{C} \geq 0 \}
\end{align}
where $\M{A} = [\M{X};\V{1}^T]$

%
%
We solve this LP using the proximal point algorithm \cite{Ref10} in \textbf{Algorithm 1} and in our implementation, $\{t_k\}$ are set to a large constant $100$. The discussion on the convergence of the algorithm can be found in \cite{Ref10}.
\begin{algorithm}
\textbf{Input:} A column normalized matrix $\M{X} \in \Real_{+}^{m\times n}$, stopping threshold $\epsilon$.\\
\textbf{Output:} A matrix $\M{F}\in\Real_{+}^{m\times r}$ and $\M{W}\in\Real_{+}^{r\times n}$, and $\M{X} = \M{FW}$.
\begin{algorithmic}
\item[1:] Initialize $\M{Q}^{0} = 0$ and $\M{C}^{0} = 0$, randomly generate $\V{p}\in \Real_{+}^{m\times 1}$. \\
\item[2:] Update $\M{C}^{k+1}$: 
\begin{equation}
\label{eq_20}
\begin{aligned}
\M{C}^{k+1} & = \underset{\mathbf{C}} {\text{argmin}}\{\V{p}^{T}\text{diag}(\M{C}) + \frac{1}{t^{k}}||\mathbf{Q}^{k}+t^{k}(\M{A}\M{C} - \M{A})||^2_2 \} \\ 
& = \frac{1}{2t^{k}}(\M{A^{T}A})^{-1}\left(2t^{k} \M{A}^{T} \M{A}- \text{diag}(\V{p}) - 2\M{A}^{T}\mathbf{Q}^{k}\right) \nonumber  
\end{aligned}
\end{equation}
\item[3:] Project $\M{C}^{k+1}$ to the constraint $\M{C}\geq 0$ using $\M{C}^{k+1} = \mathrm{pos}(\M{C}^{k+1})$, where $\mathrm{pos}(\cdot)$ keeps the positive elements and switch the negative elements to $0$.
\item[4:] Update $\mathbf{Q}^{k+1}$ : $\mathbf{Q}^{k+1} = \mathbf{Q}^{k} + t^{k}(\M{AC}^{k+1} - \M{A}) $.
\item[5:] Stop the iterations if $||\M{C}^{k+1} - \M{C}^{k}||_2\leq\epsilon$.
\item[6:] Let $I = \{i:\M{C}_{ii}=1\}$ and set $\M{F} = \M{X}_{I}$ as well as obtain $\M{W} = \M{C}(I,:)$.  
\end{algorithmic}
\caption{Robust NMF by \emph{Proximal Point} Algorithm}
\label{algo:relgraph}
\end{algorithm} 

%
\section{Experiments Results}
\label{Sec4}
All of the experiments were run on an identical configuration: a dual Xeon W3505 (2.53GHz) machine with 6GB RAM. \emph{Proximal Point} Algorithm is examined in MATLAB with the version of 2013a. 

\subsection{Random Data Generation}
To generate our instances, $r$ independent extreme rays are firstly created in $\mathbb{R}_{+}^{m\times 1}$, with the element value between $[0,100]$. The remaining columns are then generated to be the random non-negative combinations of the $r^{\prime}$ extreme rays, where $r^{\prime}\in[2,r]$ is randomly selected for each of the $n-r$ points. The column normalization is carried out sequentially. Three regimes of NMF problems are analyzed here: 
\begin{itemize}
\item (C1). $m \geq n, m \geq r$, which is motivated from the data structure for face recognition\cite{Ref_ex_8} \item (C2). $r\leq m\leq n$, which is the scenario for topic modeling problem \cite{Ref_ex_2}
\item (C3). $m\leq r\leq n$, which can be applied to metabolic network data \cite{Ref_ex_7}.
\end{itemize}

Furthermore, since the algorithm is free from the order of the columns, the $r$ extreme rays are allocated at the beginning of each data set. 

Different size of data sets are generated to check the effectiveness of \emph{Proximal Point} Algorithm, from small to large-scale. In Tab.~\ref{table_1}, the last column indicates the highest level for iteration stopping criterion $\epsilon$ to achieve the listed accuracy. From the experiments, it is exhibited that our algorithm can deal with three regimes of the data with different sizes. Moreover, the identification accuracy is satisfying.  

\begin{table}[!t]
\renewcommand{\arraystretch}{1.3}
\caption{Experiments on different Dataset regarding to C1-C3}
\label{table_1}
\centering
\begin{tabular}{|l||c|c|c|}
\hline
Data Set & \# of Extreme Rays & Accuracy & $\epsilon$ \\
\hline
$100\times 75$(C1) & 25 & $25/25$ & $10^{-5}$\\
\hline
$500\times 375$(C1) & 25 & $23/25$ & $10^{-4}$\\
\hline
$1200\times 600$(C1) & 300 & $300/300$ & $10^{-4}$ \\
\hline
$25\times 100$(C2) & 15 & $14/15$ & $10^{-5}$ \\
\hline
$125\times 500$(C2) & 75 & $74/75$ & $10^{-4}$\\
\hline
$425\times 1200$(C2) & 225 & $223/225$ & $10^{-4}$\\
\hline
$25\times 100$ (C3) & 45 & $45/45$ & $10^{-5}$ \\
\hline
$125\times 500$ (C3) & 150 & $150/150$ & $10^{-4}$ \\
\hline
$425\times 1200$(C3) & 625 & $625/625$ & $10^{-4}$\\
\hline
\end{tabular}
\end{table}

\subsection{Application to Image Processing}
In this section, we apply the \emph{Proximal Point} algorithm to one face image processing data set, namely, CBCL Dataset\cite{Ref_ex_8}. Basically, the CBCL face dataset is made of 2429 gray-level face images with $19\times 19$ pixels. We randomly choose $20$ images from the dataset with vectorization to be the generators, which means the number of extreme rays in this case is $r=20$. Through the random non-negative combination of the extreme rays, a $361\times 500$ facial data matrix is created. Applying \emph{Proximal Point} algorithm to this dataset, the results of the extreme rays identification are shown in Fig.~\ref{fig3}, which represents the initial $20$ images as generators. 
\begin{figure}[ht]
\centering
\includegraphics[scale=0.55]{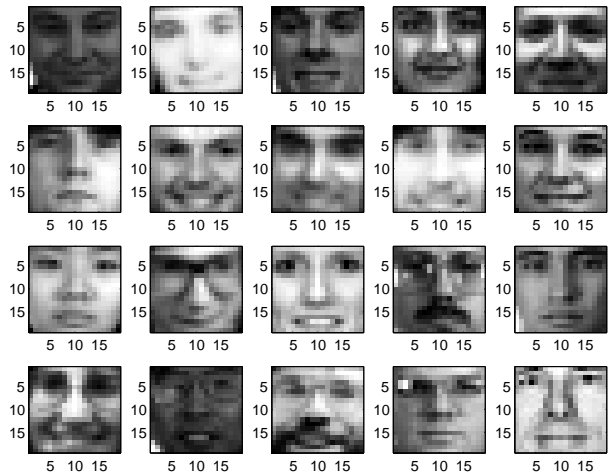}
\caption{The facial images identified as extreme rays for the $361\times 500$ data set with $20$ extreme rays. The stopping criterion was selected as $10^{-5}$.}
\label{fig3}
\end{figure}

\section{Acknowledgements}
The second author would like to acknowledge several useful discussions with Prof. Prakash Ishwar at Boston University.

\end{document}